\DeclareMathOperator*{\argmax}{argmax}
\newcommand{\pgml}{\mathcal{P}_{G_{ML}}}
\newcommand{\dml}{D_{ML}}
\newcommand{\gml}{G_{ML}}
\newcommand{\method}{$\mathsf{GANDEC}$}
\newtheorem{theorem}{Theorem}
\newtheorem{lemma}{Lemma}
\newtheorem{proposition}{Proposition}
\DeclareMathOperator{\sign}{sign}
\def\BibTeX{{\rm B\kern-.05em{\sc i\kern-.025em b}\kern-.08em
    T\kern-.1667em\lower.7ex\hbox{E}\kern-.125emX}}
\begin{document}

\title{Adversarial Neural Networks for \\ Error Correcting Codes}

\author{%
	\IEEEauthorblockN{Hung T. Nguyen\IEEEauthorrefmark{1},
		Steven Bottone\IEEEauthorrefmark{2},
		Kwang Taik Kim\IEEEauthorrefmark{3},
		Mung Chiang\IEEEauthorrefmark{3},
		and H. Vincent Poor\IEEEauthorrefmark{1}}
	\IEEEauthorblockA{\IEEEauthorrefmark{1}%
		Princeton University,
		\{hn4,poor\}@princeton.edu}
	\IEEEauthorblockA{\IEEEauthorrefmark{2}%
		Northrop Grumman Corporation,
		steven.bottone@ngc.com}
	\IEEEauthorblockA{\IEEEauthorrefmark{3}%
		Purdue University,
		\{kimkt,chiang\}@purdue.edu}
}

\maketitle

\begin{abstract}
Error correcting codes are a fundamental component in modern day communication systems, demanding extremely high throughput, ultra-reliability and low latency. Recent approaches using machine learning (ML) models as the decoders offer both improved performance and great adaptability to unknown environments, where traditional decoders struggle. We introduce a general framework to further boost the performance and applicability of ML models. We propose to combine ML decoders with a competing discriminator network that tries to distinguish between codewords and noisy words, and, hence, guides the decoding models to recover transmitted codewords. Our framework is game-theoretic, motivated by generative adversarial networks (GANs), with the decoder and discriminator competing in a zero-sum game. The decoder learns to simultaneously decode and generate codewords while the discriminator learns to tell the differences between decoded outputs and codewords. Thus, the decoder is able to decode noisy received signals into codewords, increasing the probability of successful decoding. We show a strong connection of our framework with the optimal maximum likelihood decoder by proving that this decoder defines a Nash's equilibrium point of our game. Hence, training to equilibrium has a good possibility of achieving the optimal maximum likelihood performance. Moreover, our framework does not require training labels, which are typically unavailable during communications, and, thus, seemingly can be trained online and adapt to channel dynamics. To demonstrate the performance of our framework, we combine it with the very recent neural decoders and show improved performance compared to the original models and traditional decoding algorithms on various codes.
\end{abstract}

\begin{IEEEkeywords}
Error correcting codes, adversarial neural networks, deep unfolding
\end{IEEEkeywords}

\section{Introduction}
New data-intensive use cases, e.g., autonomous driving, high-precision industrial plants, demand high throughput, ultra-reliable, and low latency communications for the fifth \cite{TS38.104} and subsequent generation networks. This demand requires communication algorithms to be performing and highly adaptive to any changes in the environment. For example, iterative error correcting codes' decoders need to operate with minimal number of iterations, while providing peak decoding performance at any time subject to channel variations. However, little is theoretically known about optimal parameter tuning under strict iterative constraints and it is usually established using heuristics via simulations or pessimistic bounds in practice \cite{Balatsoukas--Studer2019}, which limit the efficacy and applicability of those methods.



Machine learning approaches have been introduced recently as a promising candidate to address these challenges. As an exemplar, deep unfolding networks \cite{hershey2014deep} show great performance and channel adaptability across various tasks, such as decoders for error correcting codes \cite{nachmani2018deep,lugosch2017neural,lugosch2018learning,zhang2020iterative}, symbol detection \cite{shlezinger2020deepsic,shlezinger2019viterbinet,samuel2019learning,he2018model}, and MIMO precoding \cite{balatsoukas2019neural}, by combining the learning power of emerging deep neural networks (DNNs) with rigorous analytical understanding of model-based iterative algorithms in communications.
On one hand, deep unfolding removes the assumptions of existing algorithms on channel conditions and can deal with both linear and nonlinear channels, while retaining the interpretability of well understood iterative algorithms thanks to having the same structure \cite{yedidia2001characterization,kuck2020belief}.
On the other hand, deep unfolding can speed up computation by reducing the number of iterations required to achieve good performance due to the impressive learnability of DNNs, requiring only a few layers corresponding to a small number of iterations in existing iterative algorithms.
However, there is not a formal theoretical result about deep unfolding networks. All the properties of underlying algorithm are informally carried over to deep unfolding model due to structure sharing. 

We propose a framework to further boost the performance of ML models for the task of decoding error correcting codes by leveraging the same idea in generative adversarial networks (GANs) of having two models competing in a game and together improving their performances (Subsection~\ref{subsec:model}).
In our framework, we have an ML decoder learning to decode received (noisy) signals and design a discriminator to distinguish between (valid) codewords and decoder's outputs.
The goal of the decoder is 
to deceive the discriminator to believe in that its decoded sequences are an actual codewords.
As the discriminator gets better in recognizing codewords, the decoder must learn to generate codeword-like sequences or to avoid non-codeword regions (typically significantly large). This can be seen as introducing an insightful constraint on the decoder's outputs in addition to existing ones on the decoder, e.g., structural constraints on deep unfolding networks. Thus, intuitively the decoder finds a codeword that is closest to the noisy signal with respect to some hidden metric defined by the discriminator, i.e., matching the distribution of true codewords.

Our game-theoretic framework, combining an ML decoder and a discriminator, enjoys multiple advantageous properties.
We first prove that the optimal maximum likelihood decoder of a code along with a custom trained discriminator form a Nash's equilibrium of our game model (Subsection~\ref{subsec:approach_ml}).
This establishes a strong connection between our model and the optimal maximum likelihood decoder: if we train the model till convergence (equilibrium), it has a good chance of approaching the optimal performance. Furthermore, applying our framework on the deep unfolding decoders \cite{lugosch2018learning,lugosch2017neural,zhang2020iterative}, that unfold several iterations of the belief propagation (BP) algorithm into a deep network, we show that a simple variant of our model also preserves all the fixed points of the original BP decoder (Subsection~\ref{subsec:bp_fixed}).
These fixed points correspond to solutions of minimizing the Bethe free energy \cite{yedidia2003understanding}, which is vital for inference tasks on factor graphs. More importantly, unlike existing models, our framework does not rely on training labels, i.e., transmitted signals, which are usually unavailable during communications, and thus has the capability of online training and adapts to any channel variations. We experimentally verify that our framework leads to improved decoding performance compared with the original deep unfolding models and the BP algorithms on various error correcting codes (Section~\ref{sec:exps}).

Throughout the paper, we use the unfolded BP networks \cite{lugosch2018learning,lugosch2017neural} (details in Section~\ref{sec:prelim}) as the demonstrating ML models in our framework for both the theory and experiments.

\section{Preliminaries}
\label{sec:prelim}
As the primary example of machine learning model in our paper, we describe in details the recent approach of unfolded belief propagation (BP) decoder. 
We first review the well known belief propagation (BP) decoder for error correcting codes and the deep neural network model that is constructed by unfolding this BP method.
Specifically, we summarize sum-product and min-sum algorithms along with their deep unfolding neural network models.

We consider the discrete-time additive white Gaussian noise (AWGN) channel model. In transmission time $i \in [1:n]$, the channel output is $Y_i = gX_i + Z_i$. where $g$ is the channel gain, $X_i \in \{ \pm 1\}$, $\{ Z_i \}$ is a WGN ($\sigma^2$) process, independent of the channel input $X^n = x^n(m)$, $m \in [1:2^{k}]$, and $k$ is the message length.


\begin{figure}[!h]
	\centering
	\includegraphics[width=\linewidth]{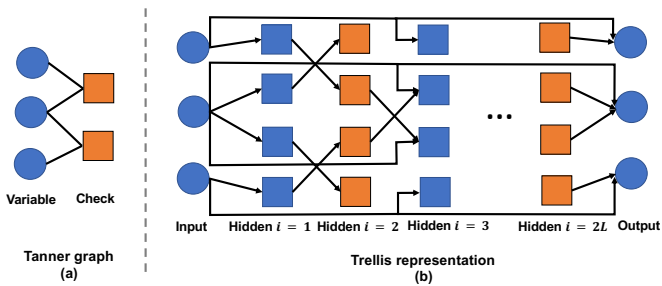}
	\caption{Tanner graph of a code and unfolded BP architecture based on code Trellis.}
	\label{fig:bp}
\end{figure}
\subsection{Belief Propagation (BP) Decoder}

Given a linear error correcting code $C$ of block-length $n$, characterized by a parity check matrix $H$ of size $n \times (n-k)$, every codeword $x \in C$ satisfies $H^T x = 0$, where $H^T$ denotes the transpose of $H$ and $0$ the vector of $n-k$ zeros.
The BP decoder for a linear code can be constructed from the Tanner graph of the parity check matrix.
The Tanner graph contains $n-k$ check nodes and $n$ variable nodes arranged as a bipartite graph with edges between check and variable nodes, representing messages passing between these nodes.
If $H_{vc} = 1$, there is an edge between variable node $v$ and check node $c$.
A simple example is given in Figure~\ref{fig:bp}(a) with $n = 3$ variable nodes and $(n-k) = 2$ check nodes.
The BP decoder consists of multiple iterations.
Each iteration corresponds to one round of messaging passing between variable and check nodes.
In iteration $i$, on edge $e = (v,c)$, the message $x_{i, (v,c)}$ passed from variable node $v$ to the check node $c$ is given by
\begin{align}
	\label{eq:m_vc_bp}
	x_{i, e = (v,c)} = l_v + \sum_{e' = (c',v), c' \neq c} x_{i-1, e'},
\end{align}
where $l_v$ is the log-likelihood ratio of the channel output corresponding to the $v$th codeword bit $C_v$, i.e., $l_v = \log\frac{\Pr(C_v = 1 | y_v)}{\Pr(C_v = 0 | y_v)}$.

In the sum-product algorithm, the message $x_{i, (c,v)}$ from check node $c$ to variable node $v$ in iteration $i$ is computed as
\begin{align}
	\label{eq:m_cv_bp}
	x_{i, e = (c,v)} = 2 \tanh^{-1} \Bigg( \prod_{e' = (v',c), v' \neq v} \tanh \frac{x_{i,e'}}{2} \Bigg).
\end{align}
For numerical stability, the messages going into a node in both Eq.~(\ref{eq:m_vc_bp}) and Eq.~(\ref{eq:m_cv_bp}) are normalized at every iteration, and all the messages are also truncated within a fixed range.

Note that the computation in Eq.~(\ref{eq:m_cv_bp}) involves repeated multiplications and hyperbolic functions, which may lead to numerical problems in actual implementations. Even with truncation of message values within a certain range, e.g., commonly $[-10, 10]$, we still observe numerical problems when running the sum-product algorithm.



The min-sum algorithm uses a ``min-sum'' approximation of the above message as follows:
\begin{align}
	x_{i, e = (c,v)} = \min_{e' = (v',c), v' \neq v} |x_{i,e'}| \prod_{e' = (v',c), v' \neq v} \sign(x_{i,e'}). \nonumber
\end{align}
Suppose the BP decoder runs $L$ iterations. Then the $v$th output after the final iteration is given by
\begin{align}
	\label{eq:output}
	o_v = l_v + \sum_{e' = (c',v)} x_{L,e'}.
\end{align}

\subsection{Unfolded BP Models}
The BP decoder has an equivalent Trellis representation that motivates the unfolding neural network architecture.
A simple example is provided in Figure~\ref{fig:bp}(b).
Each BP iteration $i$ unfolds into 2 hidden layers, $2i-1$ and $2i$, corresponding to two passes of messages from variable to check nodes and from check to variable nodes.
The number of nodes in each hidden layer is the same as the number of edges in the Tanner graph.
Each node computes a message sent through each edge.
Thus, there are $2\times L$ hidden layers in this Trellis representation in addition to input and output layers.
The input takes log-likelihood ratios of the received signal, and the output computes Eq.~\eqref{eq:output}.

The unfolding neural network model shares the same architecture as the Trellis representation and adds trainable weights to the edges.
In other words, in odd layer $i$, the computation at node $i, e = (v,c)$, performs the following operation (activation function):
\begin{align}
	\label{eq:m_vc}
	x_{i, e = (v,c)} = w_{i,v}l_v + \sum_{e' = (c',v), c' \neq c} w_{i,e,e'}x_{i-1, e'}.
\end{align}
Note that trainable weights $w_{i,v}$ and $w_{i,e,e'}$ are introduced in the computation.
These weights can be trained and address various challenging obstacles with the BP method: 1) reduce the large number of iterations needed in BP, leading to higher efficiency; 2) improve decoding performance and possibly approach optimal criterion, thanks to the ability to minimize the effects of short cycles in the Tanner graph known to cause difficulties for BP decoders.

In even layer $i$, the computation at node $i, e = (c,v)$ is similar to that in the regular BP algorithm:
\begin{align}
	\label{eq:m_cv}
	x_{i, e = (c,v)} = 2 \tanh^{-1} \Bigg( \prod_{e' = (v',c), v' \neq v} \tanh \frac{x_{i-1,e'}}{2} \Bigg).
\end{align}
For numerical stability, we usually apply truncation of messages after every layer.

Nodes at the output layer perform
\begin{align}
	\label{eq:o_v}
	o_v = \sigma \Bigg(w_{2L + 1,v}l_v + \sum_{e' = (c',v)} w_{2L+1,v,e'} x_{2L,e'} \Bigg),
\end{align}
where $\sigma(\cdot)$ is the sigmoid function to obtain probability from the log-likelihood ratio representation.
Note that trainable weights $w_{2L + 1,v}$ and $w_{2L+1,v,e'}$ are also introduced here.

\section{Proposed Framework}
\label{sec:alg}

Motivated by the recent advances in generative adversarial networks (GANs), we propose to combine a decoder model, e.g., unfolded BP network model, with a discriminator that distinguishes between codewords following the uniform distribution and decoded outputs of the decoder.
The decoder and discriminator compete in a zero-sum game: the decoder tries to fool the discriminator into thinking that its outputs are codewords, i.e., maximizing the classification error; on the other hand, the discriminator is a classification model that tries to tell which input sequences are original codewords and which are generated by the decoder.
Effectively, the two models improve simultaneously as the better decoder or discriminator gives better feedback to the other about how to improve to compete better.

\subsection{Architecture} \label{subsec:model}
\begin{figure}[!t]
	\centering
	\includegraphics[width=\linewidth]{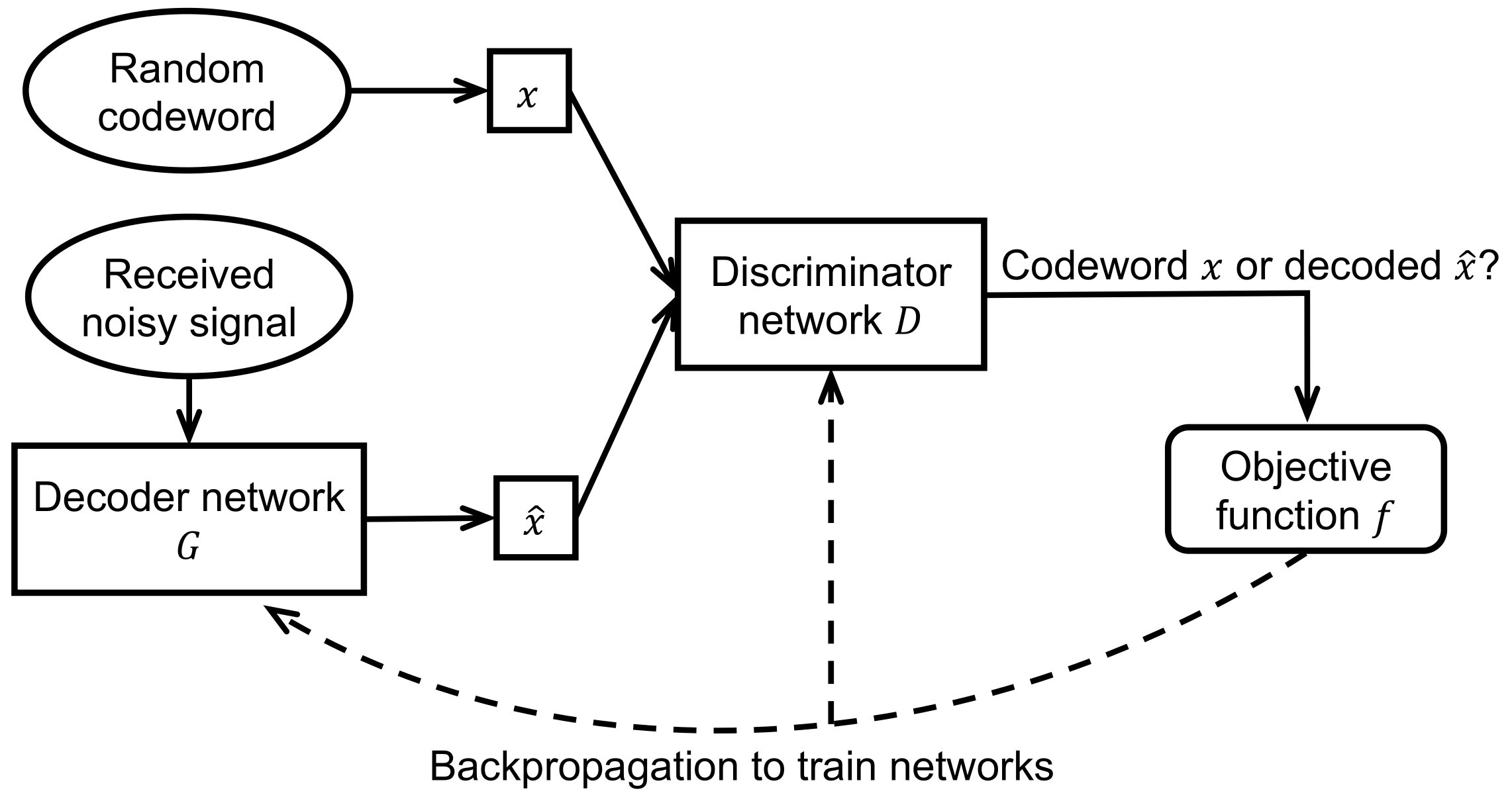}
	\caption{Proposed framework with decoder and discriminator networks competing and improving each other.}
	\label{fig:proposed_model}
\end{figure}

The proposed framework is described in Figure~\ref{fig:proposed_model}.
The decoder network $G$ takes the noisy signal from the channel and produces the decoded output $\hat x$, e.g., $o_v$ in Eq.~\ref{eq:o_v}. The discriminator network $D$ is a classification model and aims at discriminating between codewords $x$ and decoded outputs $\hat x$ from the decoder.
The function $f(G,D)$ evaluates how well the discriminator network performs.
Thus, the decoder's goal is the opposite of the discriminator's, minimizing $f(G,D)$.

Let $\mathcal{U}$ denote the uniform distribution generating random codewords and $\mathcal{P}_G$ the resulting distribution of the decoder's outputs.
Here, we slightly abuse notation and use $\mathcal{U}$ and $\mathcal{P}_G$ to denote both a distribution and a probability mass function.
We use the following common objective function in GANs:
\begin{align}
	\label{eq:f_gd}
  f(G,D) = \mathbb{E}_{x \sim \mathcal{U}} [\log D(x)] + \mathbb{E}_{y \sim \mathcal{P}_G} [\log(1 - D(y))].
\end{align}
This function measures how well the discriminator classifies codewords from decoded outputs.
In other words, the discriminator tries to maximize this function while the decoder aims at minimizing it.
Hence, the optimization problem is to find $G^*$ and $D^*$ that solve the following min-max problem:
\begin{align}
	\label{eq:opt_problem}
  \min_{G}\max_{D} \mathbb{E}_{x \sim \mathcal{U}} [\log D(x)] + \mathbb{E}_{y \sim \mathcal{P}_G} [\log (1 - D(y))],
\end{align}
where $G$ is the decoder network with parameters $w_{i,v}, w_{i,e,e'}$ and $w_{2L+1,v,e'}$, and $D$ is a classification model within some class of models.
As we will show later, with different classes of classification models, we can prove various results regarding the equilibrium between decoder and discriminator with respect to the min-max game in Eq.~(\ref{eq:opt_problem}).

\subsection{Theoretical Properties}

\subsubsection{Approaching the Optimal Maximum Likelihood Performance}
\label{subsec:approach_ml}
We first prove that our model has a strong connection with the optimal maximum likelihood decoder, which is computationally intractable in practice due to its hardness.
We show that the maximum likelihood decoder along with a discriminator network trained with this decoder is, in fact, an Nash's equilibrium of our min-max game defined in Eq.~(\ref{eq:opt_problem}).
Let $G_{ML}$ represent the optimal maximum likelihood decoder and $D_{ML}$ be defined as follows:
\begin{align}
	\label{eq:dml_prob}
  D_{ML} = \argmax_{D} \mathbb{E}_{x \sim \mathcal{U}} [\log D(x)] + \mathbb{E}_{y \sim \pgml} [\log (1 - D(y))].
\end{align}
Note that $D$ is in the class of all possible classification models.

From \cite{goodfellow2014generative}, we have the following lemma characterizing $\dml$ based on $\gml$.
\begin{lemma}
	\label{lem:dml_opt}
	Let $D$ be from the class of all classification models.
	The solution $\dml$ for problem (\ref{eq:dml_prob}) with respect to fixing $G$ to be $\gml$, i.e., having distribution $\pgml$, is
	\begin{align}
    \dml(x) = \frac{\mathcal{U}(x)}{\mathcal{U}(x) + \pgml(x)}, ~ x \in C. \nonumber
	\end{align}
\end{lemma}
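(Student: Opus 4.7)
The plan is to follow the standard pointwise maximization argument of Goodfellow et al.\ adapted to the discrete codeword setting. Since $G$ is fixed to $\gml$, the objective in \eqref{eq:dml_prob} depends on $D$ only through the two expectations, which I would rewrite as a single sum over the (finite) support of codewords. Concretely, since $\mathcal{U}$ is supported on $C$ and $\pgml$ is the push-forward through $\gml$ of noisy inputs (also supported on $C$, since $\gml$ outputs codewords by definition of the maximum likelihood decoder), the objective becomes
\begin{align}
\sum_{x \in C} \Bigl( \mathcal{U}(x) \log D(x) + \pgml(x) \log(1 - D(x)) \Bigr). \nonumber
\end{align}

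The key observation is that since $D$ ranges over \emph{all} classification models (equivalently, all functions $C \to [0,1]$), its values at distinct points $x$ are unconstrained, so the sum can be maximized term by term. For a fixed $x \in C$, I would set $a = \mathcal{U}(x)$ and $b = \pgml(x)$ and maximize $g(d) = a \log d + b \log(1-d)$ over $d \in [0,1]$. Computing $g'(d) = a/d - b/(1-d)$ and solving $g'(d) = 0$ yields $d^\star = a/(a+b)$; the second derivative $g''(d) = -a/d^2 - b/(1-d)^2 < 0$ confirms this is the global maximum on $(0,1)$. Plugging back in gives exactly the claimed formula $\dml(x) = \mathcal{U}(x)/(\mathcal{U}(x) + \pgml(x))$.

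The only step that requires a small amount of care, rather than being entirely mechanical, is the domain of definition: I must argue that for every $x \in C$ the denominator $\mathcal{U}(x) + \pgml(x)$ is strictly positive, which holds because $\mathcal{U}$ is uniform on $C$ and hence $\mathcal{U}(x) > 0$ for all $x \in C$. For $x \notin C$ both $\mathcal{U}(x)$ and $\pgml(x)$ vanish, so those terms contribute nothing to the objective and $\dml$ can be defined arbitrarily there (the lemma restricts attention to $x \in C$ for exactly this reason). Beyond this, I anticipate no real obstacle, since the lemma is essentially a direct restatement of Proposition~1 of Goodfellow et al.\ \cite{goodfellow2014generative} specialized to a discrete distribution over a finite codebook; the main value here is confirming that the argument transfers verbatim once the continuous densities are replaced by probability mass functions.
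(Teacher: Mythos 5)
Your proposal is correct and is essentially the same argument the paper relies on: the paper does not prove Lemma~\ref{lem:dml_opt} itself but cites Proposition~1 of Goodfellow et al.\ \cite{goodfellow2014generative}, whose proof is exactly your pointwise maximization of $a\log d + b\log(1-d)$ yielding $d^\star = a/(a+b)$. Your added care about the support (positivity of $\mathcal{U}(x)$ on $C$ and the irrelevance of $x \notin C$) is a sensible adaptation to the discrete codeword setting and introduces no gap.
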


Based on the above lemma, we establish the following theorem showing that $\gml$ and $\dml$ form an equilibrium of our min-max problem in (\ref{eq:opt_problem}).
\begin{theorem}
	\label{theo:ml_equi}
	The optimal maximum likelihood decoder $\gml$ and the corresponding discriminator $\dml$ form an equilibrium for our min-max problem in (\ref{eq:opt_problem}).
	Specifically, we have
	\begin{align}
		\max_{D} f(\gml, D) \leq f(\gml, \dml) \leq \min_{G} f(G, \dml), \nonumber
	\end{align}
	where $f(G,D)$ is defined in Eq.~(\ref{eq:f_gd}) and $G$ is from the class of all decoder networks with different weight parameters, i.e.,  $w_{i,v}, w_{i,e,e'}$ and $w_{2L+1,v,e'}$ as defined in Eq.~(\ref{eq:m_vc}) and Eq.~(\ref{eq:o_v}).
\end{theorem}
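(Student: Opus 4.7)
The plan is to verify the two saddle-point inequalities in the theorem separately, using Lemma~\ref{lem:dml_opt} for the left one and a symmetry argument for the right one.

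The left inequality $\max_D f(G_{ML},D) \le f(G_{ML}, D_{ML})$ is essentially tautological: by the very definition of $D_{ML}$ in Eq.~(\ref{eq:dml_prob}) it is an argmax of $f(G_{ML}, \cdot)$ over the class of all classifiers, so no discriminator can achieve a strictly larger value. Lemma~\ref{lem:dml_opt} pins this $D_{ML}$ down explicitly on $C$ as $D_{ML}(x) = \mathcal{U}(x)/(\mathcal{U}(x)+\pgml(x))$, which I will want in the next step.

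For the right inequality, the plan is to first establish the key intermediate fact $\pgml = \mathcal{U}$. Because the linear code $C$ under BPSK modulation forms a multiplicative subgroup of $\{\pm1\}^n$, the ML rule $G_{ML}(y) = \argmin_{c \in \tilde C}\|y-c\|^2$ is equivariant under componentwise multiplication by any codeword, and the AWGN noise $Z$ satisfies $X \odot Z \stackrel{d}{=} Z$ when $X \sim \mathcal{U}$ is independent. A short averaging calculation then shows that for any $c \in C$,
\begin{align*}
\Pr[G_{ML}(X+Z)=c] &= \tfrac{1}{|C|}\sum_{x\in C}\Pr[x\odot G_{ML}(\mathbf{1}+Z')=c] = \tfrac{1}{|C|},
\end{align*}
so $\pgml=\mathcal{U}$. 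Substituting into Lemma~\ref{lem:dml_opt} gives $D_{ML}(x)=\tfrac12$ on $C$; on the complement $\mathcal{U}$ and $\pgml$ both vanish, and I take the natural extension $D_{ML}\equiv 0$ there (consistent with $\mathcal{P}_G$-mass being penalised, and with the limit of optimal discriminators as their training mass leaves the non-codeword region). With that convention,
\begin{align*}
f(G, D_{ML}) = -\log 2 \;+\; \mathcal{P}_G(C)\log\tfrac12 \;+\; \mathcal{P}_G(C^c)\log 1 = -\log 2 \cdot (1+\mathcal{P}_G(C)),
\end{align*}
which is minimised at $\mathcal{P}_G(C)=1$. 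Since $G_{ML}$ always outputs an element of $C$, we have $\mathcal{P}_{G_{ML}}(C)=1$ and hence $f(G_{ML},D_{ML}) = -2\log 2 \le f(G,D_{ML})$ for every decoder $G$.

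The main obstacle I expect is justifying $\pgml = \mathcal{U}$: this is the one non-formal step, and it depends on exploiting the group symmetry of the linear code together with the sign-symmetry of the AWGN noise, plus a side remark that ML ties have measure zero so tie-breaking does not affect the output distribution. A smaller but genuine subtlety is that the problem (\ref{eq:dml_prob}) does not pin $D_{ML}$ down outside the supports of $\mathcal{U}$ and $\pgml$, so the saddle-point claim is only meaningful for a specific extension of $D_{ML}$ to $C^c$; I will state the convention $D_{ML}\equiv 0$ there explicitly and note that the argument would break (in fact $\min_G f(G,D_{ML})$ could go to $-\infty$) for extensions that place mass close to $1$ off $C$.
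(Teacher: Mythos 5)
Your proposal is correct and follows essentially the same route as the paper: the left inequality is immediate from the definition of $D_{ML}$ as an argmax, and the right inequality is obtained by using the uniformity of $\mathcal{P}_{G_{ML}}$ over $C$ to show that $f(\cdot, D_{ML})$ is minimized exactly by decoders whose outputs are always codewords, with value $2\log\tfrac12 = f(G_{ML},D_{ML})$. The only difference is that you make explicit two points the paper leaves implicit --- the group-symmetry argument giving $\mathcal{P}_{G_{ML}}=\mathcal{U}$, and the convention $D_{ML}\equiv 0$ off $C$ (the paper silently evaluates $\log\bigl(1-D_{ML}(y)\bigr)$ as $0$ there) --- which is a worthwhile tightening but not a different proof.
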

\begin{proof}
	The first inequality $\max_{D} f(\gml, D) \leq f(\gml, \dml)$ follows immediately from the definition of $\dml$ in Lemma~\ref{lem:dml_opt} that $\dml$ is the optimal classification model with respect to the optimal maximum likelihood decoder $\gml$.

	To prove the second inequality $f(\gml, \dml) \leq \min_{G} f(G, \dml)$, consider the function $f(G, \dml)$.
	We have
  \begin{align}
		\label{eq:14}
		f(G, &\dml) \nonumber \\
		&= \mathbb{E}_{x \sim \mathcal{U}} [\log \dml(x)] + \mathbb{E}_{y \sim \mathcal{P}_G} [\log (1 - \dml(y))] \nonumber \\
		& = \mathbb{E}_{x \sim \mathcal{U}} \left[\log \frac{\mathcal{U}(x)}{\mathcal{U}(x) + \pgml(x)}\right] \nonumber \\
		& \qquad + \mathbb{E}_{y \sim \mathcal{P}_G} \left[\log \frac{\pgml(y)}{\mathcal{U}(y) + \pgml(y)}\right].
	\end{align}
	Regarding the optimization problem of $f(G, \dml)$ over $G$, the first term in Eq.~(\ref{eq:14}) is fixed for all $G$ and we can focus on the second term with $y \sim \mathcal{P}_G$.
	For any $y \not\in C$ and $\pgml(y) \neq 0$, $\log \frac{\pgml(y)}{\mathcal{U}(y) + \pgml(y)}$ achieves its maximum value of $0$ since $\log \frac{\pgml(y)}{\mathcal{U}(y) + \pgml(y)} \leq 0$.
	Therefore, to minimize $f(G,\dml)$, we can narrow our consideration to decoders that produce only codewords.
	Furthermore, due to the symmetry of linear block codes, $\pgml$ is the uniform distribution over codewords.
	Thus, for any decoder $G$ that only produces codewords, we have
	\begin{align}
		\label{eq:15}
    & \mathbb{E}_{y \sim \mathcal{P}_G} \left[\log \frac{\pgml(y)}{\mathcal{U}(y) + \pgml(y)}\right] \nonumber \\
		& = \mathbb{E}_{y \sim \pgml} \left[\log \frac{\pgml(y)}{\mathcal{U}(y) + \pgml(y)}\right] = \log \frac{1}{2}.
	\end{align}

	Taking the minimization over all decoder networks $G$ in Eq.~(\ref{eq:14}) and incorporating Eq.~(\ref{eq:15}) into Eq.~(\ref{eq:14}), we obtain
	\begin{align}
    \min_{G} f(G, \dml) & = \min_{G} \Big(\mathbb{E}_{x \sim \mathcal{U}} \left[\log \frac{\mathcal{U}(x)}{\mathcal{U}(x) + \pgml(x)}\right] \nonumber \\
		& \text{ \ \ } + \mathbb{E}_{y \sim \mathcal{P}_G} \left[\log \frac{\pgml(y)}{\mathcal{U}(y) + \pgml(y)}\right] \Big) \nonumber \\
		& = \mathbb{E}_{x \sim \mathcal{U}} \left[\log \frac{\mathcal{U}(x)}{\mathcal{U}(x) + \pgml(x)}\right] \nonumber \\
		& \text{ \ \ } + \min_{G} \left(\mathbb{E}_{y \sim \mathcal{P}_G} \left[\log \frac{\pgml(y)}{\mathcal{U}(y) + \pgml(y)}\right]\right) \nonumber \\
		& = \mathbb{E}_{x \sim \mathcal{U}} \left[\log \frac{\mathcal{U}(x)}{\mathcal{U}(x) + \pgml(x)}\right] \nonumber \\
		& \text{ \ \ } + \mathbb{E}_{y \sim \pgml} \left[\log \frac{\pgml(y)}{\mathcal{U}(y) + \pgml(y)}\right] \nonumber \\
		& = f(\gml, \dml),
	\end{align}
	which completes the proof of the second inequality and the theorem.
\end{proof}

The equilibrium point corresponding to the optimal maximum likelihood decoder suggests that if we train our framework till convergence to equilibrium, we can possibly achieve optimal maximum likelihood decoding performance.
This possibility is high for the unfolded BP network since its architecture is based on the Tanner graph that generates the code and, thus, the resulting network provides good approximations of the true likelihood over codewords.

\subsubsection{Preserving BP Fixed Points}
Specifically consider our example of unfolded BP network, we can say more about the properties of our framework that we can easily derive a variant to preserve all the \emph{fixed points} of the original BP algorithm.
We first define a fixed point of BP as a set of messages that remain the same values after updates in Eq.~(\ref{eq:m_vc_bp}) and Eq.~(\ref{eq:m_cv_bp}).
The BP method on factor graphs has been studied extensively in the literature \cite{yedidia2001characterization,yedidia2003understanding,kuck2020belief,satorras2020neural} and shown to have interesting properties.
Among those properties, an important one is the correspondence of BP fixed points to solutions of the Bethe free energy defined on a factor graph \cite{yedidia2001characterization,yedidia2003understanding}.
The solutions minimizing the Bethe free energy provide a lower-bound for the factor graph's partition function, which is useful for inferences on that graph.

We can derive a variant of our model that preserves all the fixed points of the BP method in a straightforward manner.
Specifically, let $\bar{x}_{i,e=(v,c)}$ be the updated message from variable node $v$ to check node $c$ for the original BP method in Eq.~(\ref{eq:m_vc_bp}).
The activation function in odd layer $i$ is modified to
\vspace{-0.05in}
\begin{align}
	\label{eq:m_vc_variant}
	x_{i, e = (v,c)} = & (\bar{x}_{i,e} - \bar{x}_{i-1,e}) \Big[w_{i,v}l_v \nonumber \\
	& + \sum_{e' = (c',v), c' \neq c} w_{i,e,e'}x_{i-1, e'}\Big] + \bar{x}_{i-1,e}.
\end{align}
The activation function on the even layers are the same as in the original model, i.e., Eq.~(\ref{eq:m_cv}).


We obtain the following result regarding the relation between BP fixed points and the variant of our model.
\begin{proposition}
	The simple variant of the deep unfolding network with activation function in Eq.~(\ref{eq:m_vc_variant}) preserves all the fixed points of the original belief propagation (BP) method.
\end{proposition}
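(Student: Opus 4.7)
The plan is to show that if $\{\bar{x}^\ast_e\}$ is any collection of messages forming a BP fixed point, meaning that the updates in Eq.~(\ref{eq:m_vc_bp}) and Eq.~(\ref{eq:m_cv_bp}) leave them invariant, then feeding these values into the variant unfolded network produces exactly the same messages at every subsequent layer, regardless of the weight parameters $w_{i,v}$ and $w_{i,e,e'}$. The argument proceeds one layer type at a time and reduces to substitution.

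For an odd (variable-to-check) layer, I would use the definition of $\bar{x}_{i,e}$ in the activation in Eq.~(\ref{eq:m_vc_variant}): it is the value obtained by applying the original BP variable-to-check rule (\ref{eq:m_vc_bp}) to the incoming messages $\{x_{i-1,e'}\}$. If these incoming messages are the fixed-point values, then by the fixed-point property the one-step BP update returns them unchanged, so $\bar{x}_{i,e} = \bar{x}_{i-1,e}$ (both equal $\bar{x}^\ast_e$). Therefore the prefactor $(\bar{x}_{i,e} - \bar{x}_{i-1,e})$ vanishes identically, which kills the entire bracketed learnable term independently of the weights, and Eq.~(\ref{eq:m_vc_variant}) collapses to $x_{i,e} = \bar{x}_{i-1,e} = \bar{x}^\ast_e$. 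For an even (check-to-variable) layer, the activation in Eq.~(\ref{eq:m_cv}) is literally the original BP check-to-variable rule (\ref{eq:m_cv_bp}), so by definition of a BP fixed point it again outputs $\bar{x}^\ast_e$ on every edge.

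Combining the two cases, a simple induction on the layer index $i$ shows that whenever the inputs to layer $i-1$ are the fixed-point messages, the outputs of layer $i$ are again the fixed-point messages, for all settings of the trainable weights. Hence every BP fixed point is a fixed point of the variant model, proving the proposition. There is essentially no hard step: the construction in Eq.~(\ref{eq:m_vc_variant}) was designed precisely so that the residual $(\bar{x}_{i,e} - \bar{x}_{i-1,e})$ multiplies the weighted correction before the weights act, and this is what guarantees that the preservation holds uniformly over all parameter choices. The only subtlety worth stating explicitly in the write-up is that $\bar{x}_{i,e}$ must be interpreted as a function of the current inputs $\{x_{i-1,e'}\}$ (not as a fixed precomputed quantity), since otherwise the vanishing of the prefactor at fixed points is obscured.
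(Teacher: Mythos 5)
Your proposal is correct and follows essentially the same route as the paper's own proof: substitute the fixed-point condition $\bar{x}_{i,e}=\bar{x}_{i-1,e}$ into Eq.~(\ref{eq:m_vc_variant}) so that the prefactor annihilates the weighted term, and note that the even layers are unchanged BP updates. Your version is simply a more explicit write-up (the vanishing prefactor, the layer induction, and the uniformity over weight choices are all left implicit in the paper's one-line "plugging in" argument).
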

\begin{proof}
	The proof is evident from the definition of fixed points.
	The set of $\bar{x}_{i,e=(v,c)}$ and $\bar{x}_{i,(e = (c,v))}$ is a fixed point of BP if $\bar{x}_{i,e=(v,c)} = \bar{x}_{i-1,e=(v,c)}$ and $\bar{x}_{i,(e = (c,v))} = \bar{x}_{i-1,(e = (c,v))}$ for all $(c,v)$ and $(v,c)$.
	Plugging these conditions into the activation functions of our model variant in Eq.~(\ref{eq:m_cv}) and Eq.~(\ref{eq:m_vc_variant}), we also obtain the same conditions for fixed points that $x_{i,e=(v,c)} = x_{i-1,e=(v,c)} = \bar{x}_{i-1,e=(v,c)}$ and $x_{i,(e = (c,v))} = x_{i-1,(e = (c,v))} = \bar{x}_{i-1,(e = (c,v))}$.
	Hence, any fixed point of BP is also a fixed point of our model.
\end{proof}

\begin{figure*}[h]
	\centering
	\begin{subfigure}[b]{0.42\linewidth}
		\includegraphics[width=\linewidth]{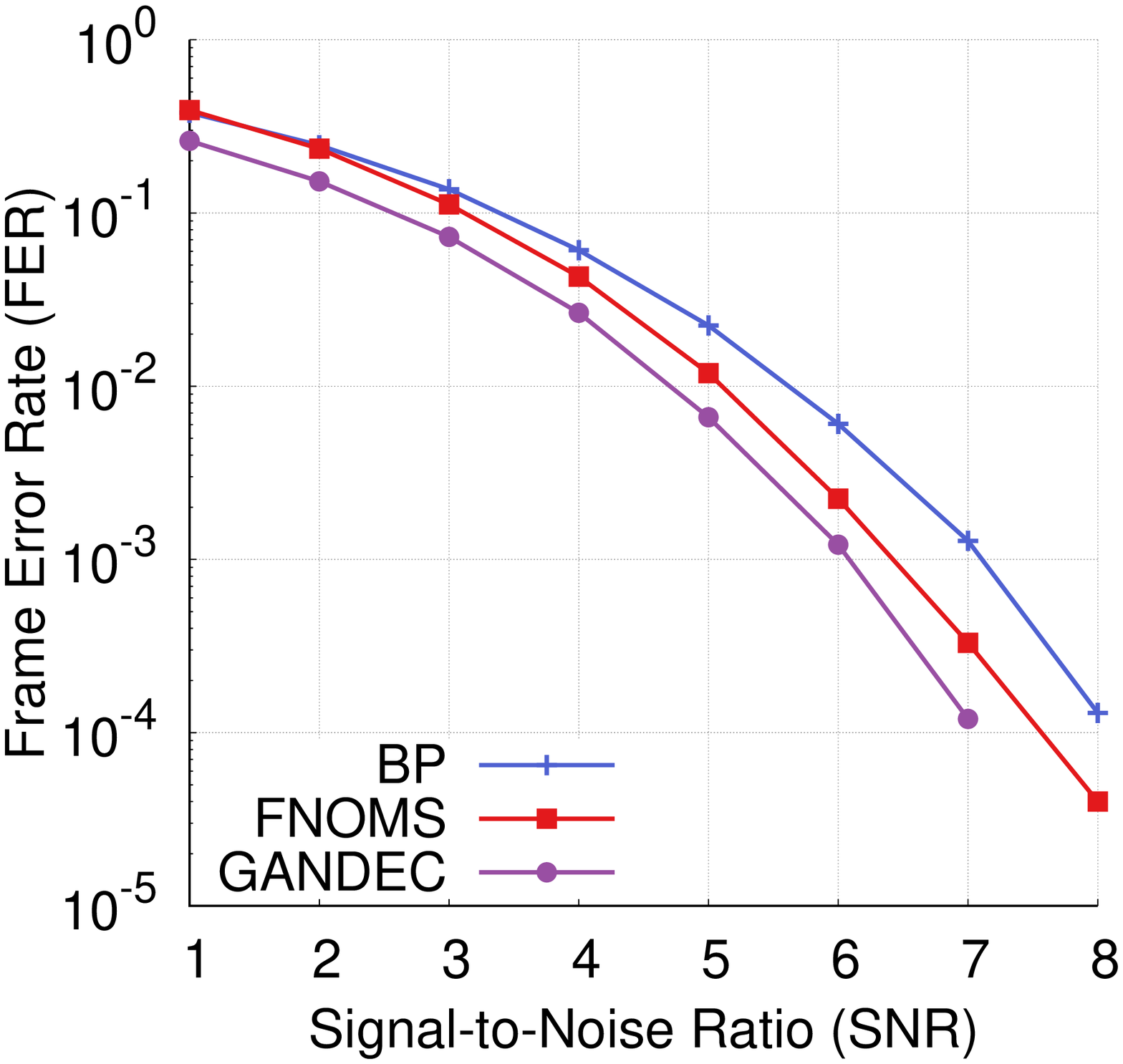}
		\caption{BCH(15,11)}
	\end{subfigure}
	\begin{subfigure}[b]{0.42\linewidth}
		\includegraphics[width=\linewidth]{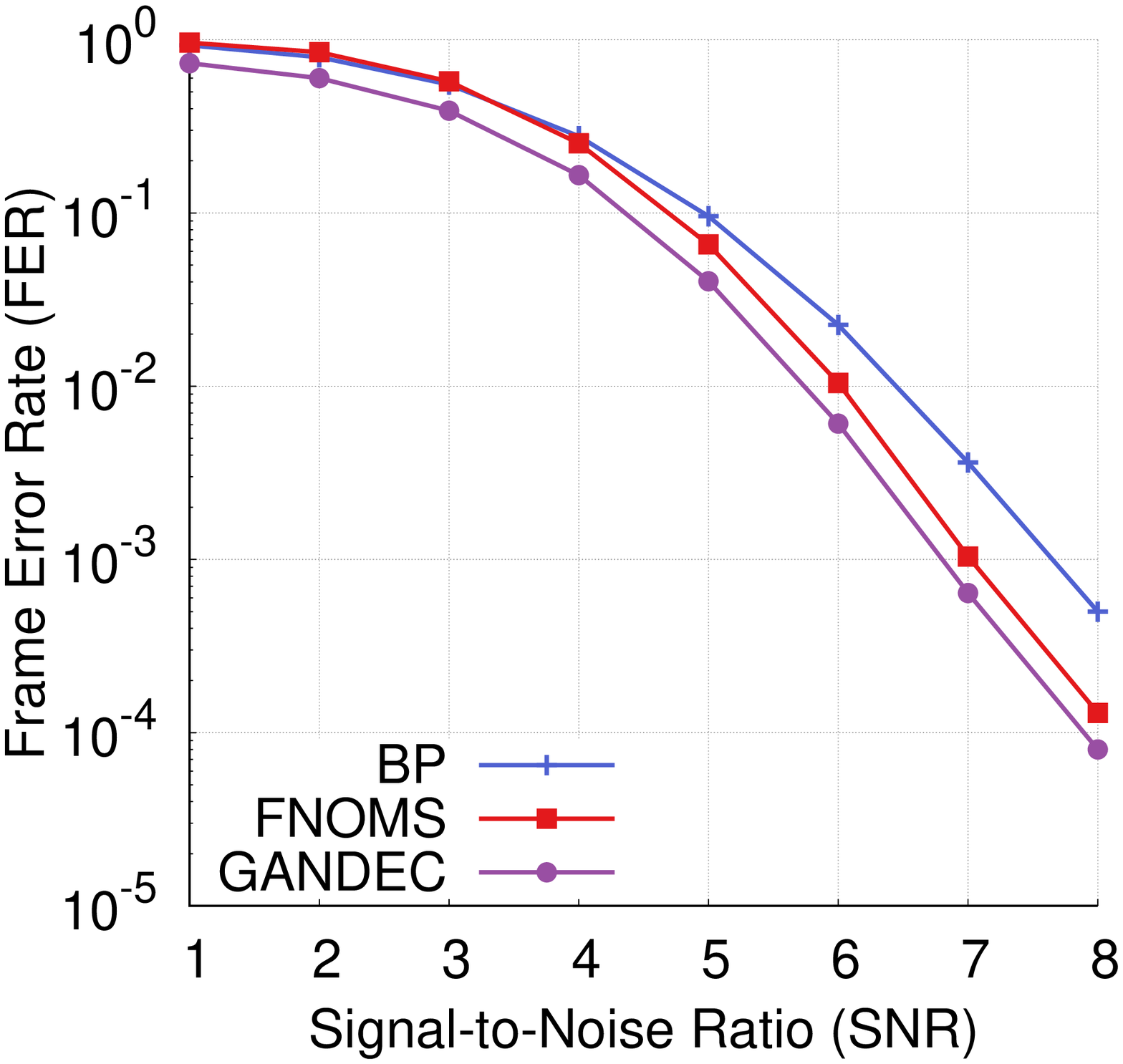}
		\caption{BCH(63,45)}
	\end{subfigure}
	\caption{Performance of different decoders on BCH codes.}
	\label{fig:bch_fer}
\end{figure*}
\begin{figure*}[h]
	\centering
	\begin{subfigure}[b]{0.42\linewidth}
		\includegraphics[width=\linewidth]{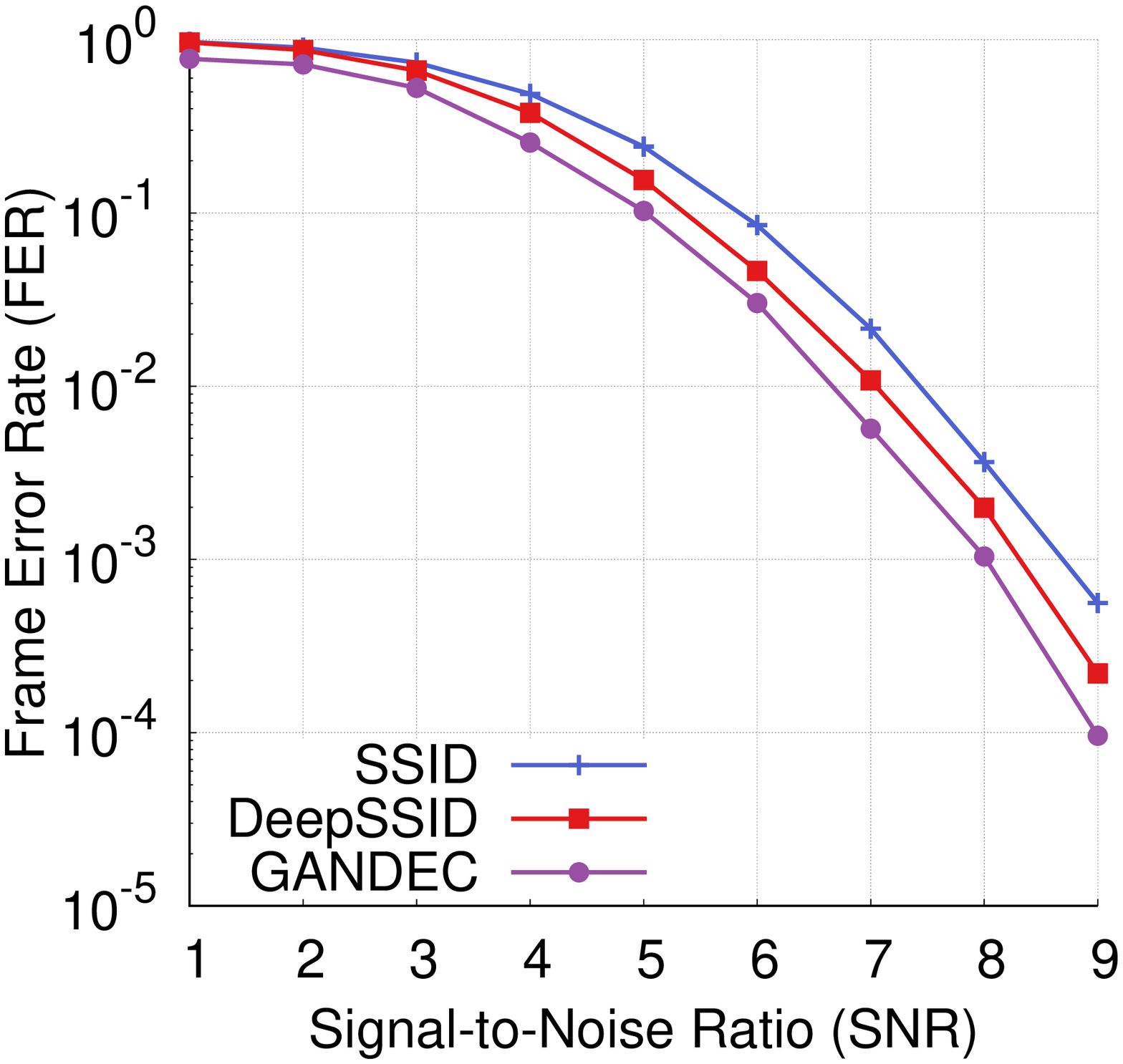}
		\caption{RS(15,11)}
	\end{subfigure}
	\begin{subfigure}[b]{0.42\linewidth}
		\includegraphics[width=\linewidth]{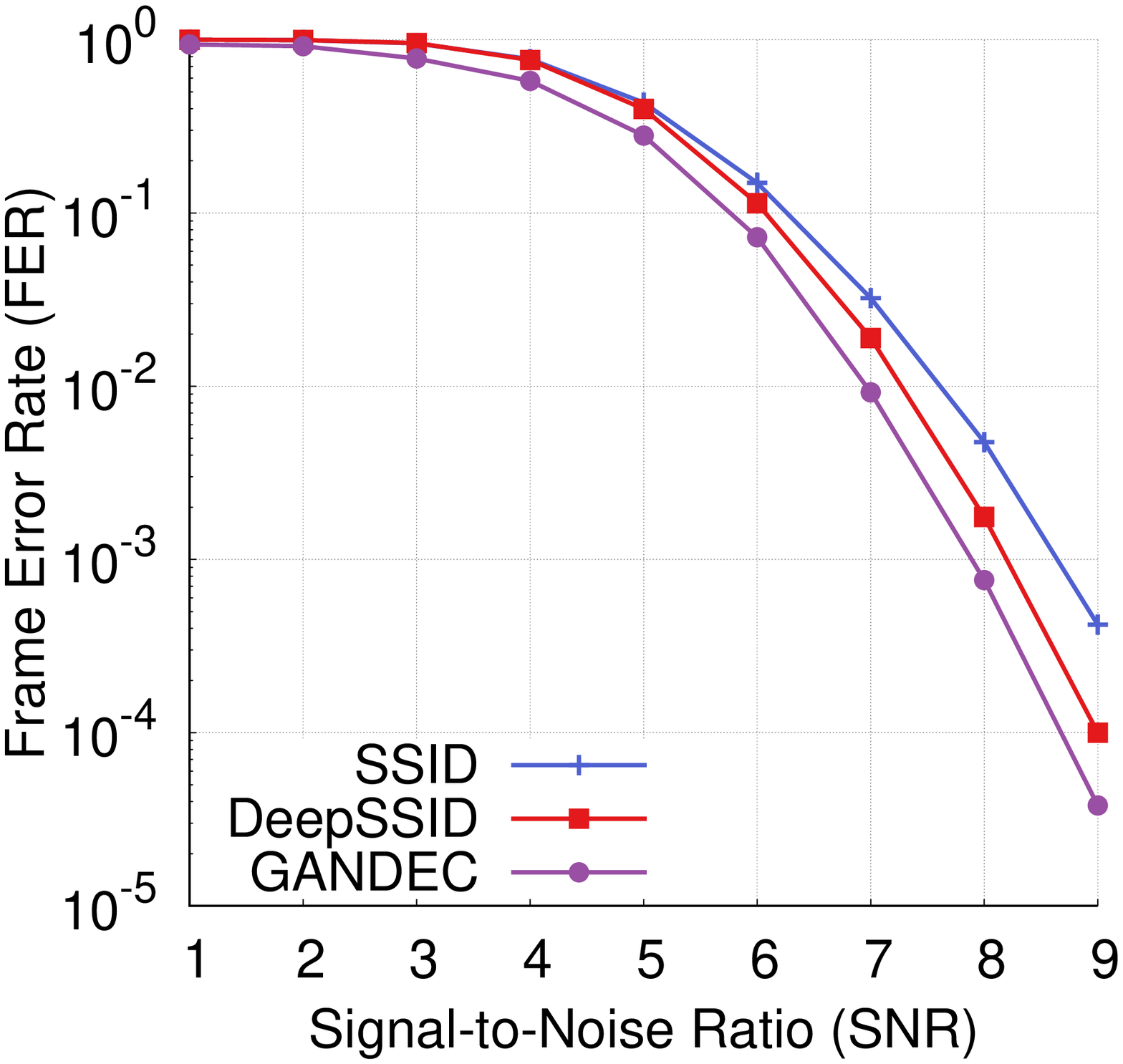}
		\caption{RS(31,27)}
	\end{subfigure}
	\caption{Performance of different decoders on Reed-Solomon (RS) codes. SSID is an effective belief propagation method for RS.}
	\label{fig:rs_fer}
\end{figure*}
\subsubsection{Unfolded BP Model Simplification}
\label{subsec:bp_fixed}

The number of weight parameters in the deep unfolding model is quite large, i.e., $O(L\cdot n + L\cdot E \cdot d_{max})$, where $L$ is the number of BP iterations, $E$ is the number of edges in the Tanner graph, and $d_{\max}$ denotes the maximum degree of any node.
We can improve the efficiency of our model by weight simplification.
We simplify the weight parameters $w_{i,v}$ and $w_{i,e,e'}$ of layer $i$ in the unfolding network into a single weight parameter $w_i$ and reduce the total number of trainable weights to $L$.
This simplification stems from our experimental observations that most of the parameters in the same layer have similar values after training.
Intuitively, this simplification helps reduce the computation in both training and testing, while resulting in similar performance as the full weight model.

\subsection{Online Training for Channel Dynamics Adaptation}

Compared to the decoder network by itself, our model has a strong advantageous characteristic that it only requires received noisy signals from the channel and uniformly random codewords, which can be generated using the generator matrix of the code.
This property makes our model amenable to online training and effectively adapting to channel dynamics.
Specifically, we can collect batches of received signals and generate random batches of codewords to train our model on the fly continually.
This contrasts with a typical decoder network, e.g., the deep unfolding model, which requires both the received signals and their corresponding transmitted codewords to train it, restricting their applicability in scenarios where the transmitted signals are not available at the receiver side. That is the common case in wireless communication with changing environment. Moreover, even if channel changes can be detected quickly and transmitted data can be obtained, retraining the decoder network could take a long time. Differently, our framework involves no such delay of obtaining data and retraining the decoder network as we can train online.




\section{Numerical Results}
\label{sec:exps}

We demonstrate the better decoding performance of our framework compared with the original decoder network and traditional decoding algorithms on various codes. Again, we consider the unfolded BP networks as the ML model in our experiments.

\subsection{Settings}
\textit{Error correcting codes and unfolded models:}
We run our experiments on two linear codes: BCH with BCH(15,11), BCH(63,45), and Reed-Solomon codes RS(15,11), RS(31,27), with an AWGN channel and various signal-to-noise ratios (SNRs).
For BCH codes, we use the neural offset min-sum deep unfolding model, FNOMS, as in \cite{nachmani2018deep} that unfolds 5 BP iterations and adds a discriminator with two fully connected layers, each of size 1024 with sigmoid activation function, and a single neuron output layer.
For Reed-Solomon codes, the decoder network, namely DeepSSID, unfolds 5 iterations of the Stochastic Shifting Based Iterative Decoding (SSID) model\cite{zhang2020iterative}. The same discriminator as above is also used for Reed-Solomon codes. Note that we deliberately use a very simple discriminator network here to  demonstrate the feasibility of deploying our framework in small devices with limited resources.

\textit{Train and test:}
We applied the Adam optimizer with a learning rate of 0.001 for training both decoder and discriminator and batch size of 64. 10000 and 100000 codewords are randomly drawn for training and testing, respectively. For our framework, we alternatively train decoder and discriminator as in a typical GAN training procedure.

\textit{Evaluation Metric:} We measure the performance of each method using Frame Error Rate (FER) as we focus on recovering the entire codeword or transmitted message.


\subsection{Results}

We compare our framework, denoted by \method, with the neural unfolded BP model and the BP decoder with 100 iterations.
The results are demonstrated in Figure~\ref{fig:bch_fer} for BCH codes and Figure~\ref{fig:rs_fer} for Reed-Solomon codes.
In both cases, the results show that \method{} achieves improved decoding performance compared to both previous deep unfolding networks and BP.
The improvement is approximately 0.5 dB compared to the unfolding model by itself, and 1 dB in comparison to BP decoder, consistently across all frame error rates and different codes. On the contrary, unfolded BP networks only improve over BP decoder for large SNR values.

Notably, the improvement of \method{} for shorter block length as in BCH(15,11) and RS(15,11) is more significant than for longer block length as in BCH(63,45) and RS(31,27).
This is possibly due to the fixed number of layers in the decoder network and suggests larger block codes may need more layers or deeper networks to perform well. Nevertheless, both \method{} and the unfolded BP model only correspond to 5 BP iterations while the BP decoder needs 100 iterations and the performance are still much lower.

\section{Conclusion}

We have proposed a game-theoretic framework that combines a decoder network with a discriminator, which compete in a zero-sum game.
The decoder tries to decode a received noisy signal from the communication channel while the discriminator tries to distinguish between codewords from the decoded words.
Both decoder and discriminator improve together to be better at their own tasks.
We show a strong connection between our model and the optimal maximum likelihood decoder as this decoder forms a Nash's equilibrium in our game model.
Numerical results confirm improved decoding performance compared to recent ML models and traditional decoders.

\bibliographystyle{IEEEtran}
\bibliography{relatedwork}

\end{document}